\newtheorem{mydef}{Definition}
\newtheorem{lemma}{Lemma}
\newtheorem{theorem}{Theorem}
\algrenewcommand\algorithmicrequire{\textbf{Input}}
\algrenewcommand\algorithmicensure{\textbf{Output}}
\def\1{{\mathbf 1}}
\def\0{{\mathbf 0 }}
\def\beq{\begin{equation} }
\def\eeq{\end{equation} }
\journal{Pattern Recognition}
\begin{document}

\begin{frontmatter}

\title{Time Series Cluster Kernel for Learning Similarities between Multivariate Time Series with Missing Data}

\author[UiT,ML]{Karl Øyvind Mikalsen\corref{cor1} }
\address[UiT]{Dept. of Math. and Statistics, UiT The Arctic University of Norway, Tromsø, Norway}
\address[ML]{UiT Machine Learning Group}

\cortext[cor1]{Corresponding author at: Department of Mathematics and Statistics, Faculty of Science and Technology,
UiT – The  Arctic University of Norway,
N-9037 Tromsø, Norway}

\ead{karl.o.mikalsen@uit.no}

\author[ML,ML_IFT]{Filippo Maria Bianchi}
\address[ML_IFT]{Dept. of Physics and Technology, UiT, Tromsø, Norway}

\author[ML,spain]{Cristina Soguero-Ruiz}
\address[spain]{Dept. of Signal Theory and Comm., Telematics and Computing, Universidad Rey Juan Carlos, Fuenlabrada, Spain}

\author[ML,ML_IFT]{Robert Jenssen}

\begin{abstract}

Similarity-based approaches represent a promising direction for time series analysis. However, many such methods rely on parameter tuning, and some have shortcomings if the time series are multivariate (MTS), due to dependencies between attributes, or the time series contain missing data. In this paper, we address these challenges within the powerful context of kernel methods by proposing the robust \emph{time series cluster kernel} (TCK). The approach taken leverages the missing data handling properties of Gaussian mixture models (GMM) augmented with informative prior distributions. 
An ensemble learning approach is exploited to ensure robustness to parameters by combining the clustering results of many GMM to form the final kernel.

We evaluate the TCK on synthetic and real data and compare to other state-of-the-art techniques. The experimental results demonstrate that the TCK is robust to parameter choices, provides competitive results for MTS without missing data and outstanding results for missing data. 

\end{abstract}

\begin{keyword}
Multivariate time series \sep Similarity measures \sep Kernel methods \sep Missing data \sep Gaussian mixture models \sep Ensemble learning
\end{keyword}
\end{frontmatter}

\section{Introduction}

Time series analysis is an important and mature research topic, especially in the context of univariate time series (UTS) prediction \cite{BoxJenkins,Chatfield,CryerChan,shumway}. The field tackles real world problems in many different areas such as energy consumption~\cite{iglesias2013analysis}, climate studies~\cite{ji2013dynamic}, biology~\cite{10.1371/journal.pone.0084955}, medicine~\cite{Hayrinen2008,soguero2015data, Soguero2016predicting} and finance~\cite{hsu2014clustering}. However, the need for analysis of multivariate time series (MTS)~\cite{tsay2013multivariate} is growing in modern society as data is increasingly collected simultaneously from multiple sources over time, often plagued by severe missing data problems~\cite{AnovaHazanZeevi,BashirWei}. These challenges complicate analysis considerably, and represent open directions in time series analysis research. The purpose of this paper is to answer such challenges, which will be achieved within the context of the powerful \emph{kernel methods}~\cite{scholkopf2001learning,shawe2004kernel} for reasons that will be discussed below.   

Time series analysis approaches can be broadly categorized into two families: (i) \textit{representation methods}, which provide high-level features for representing properties of the time series at hand, and (ii) \textit{similarity measures}, which yield a meaningful similarity between different time series for further analysis~\cite{Wang:2013, Aghabozorgi201516}. 

Classic representation methods are for instance Fourier transforms, wavelets, singular value decomposition, symbolic aggregate approximation, and piecewise aggregate approximation, \cite{Faloutsos:1994:FSM:191843.191925, chan1999efficient, korn1997efficiently, lin2007experiencing, keogh2001dimensionality}. 
Time series may also be represented through the parameters of model-based methods such as Gaussian mixture models (GMM)~\cite{Marlin:2012:UPD:2110363.2110408, bashir2005automatic, bashir2007object}, Markov models and hidden Markov models (HMMs)~\cite{Ramoni:2002:BCD:584647.584652, panuccio2002hidden, knab2003model}, time series bitmaps~\cite{kumar2005time} and variants of ARIMA~\cite{Corduas20081860, xiong2002mixtures}.  An advantage with parametric models is that they can be naturally extended to the multivariate case. 
For detailed overviews on representation methods, we refer the interested reader to \cite{Wang:2013,
Aghabozorgi201516, fu2011review}.  

Of particular interest to this paper are similarity-based approaches. Once defined, such similarities between pairs of time series may be utilized in a wide range of applications, such as classification, clustering, and anomaly detection~\cite{han2011data}.    
Time series similarity measures include for example dynamic time warping (DTW) \cite{Berndt:1994:UDT:3000850.3000887}, the longest common subsequence (LCSS)~\cite{vlachos2003indexing}, the extended Frobenius norm (Eros)~\cite{yang2007efficient}, and the Edit Distance with Real
sequences (EDR)~\cite{Chen:2005:RFS:1066157.1066213}, and represent state-of-the-art performance in UTS prediction~\cite{Wang:2013}. 
However, many of these measures cannot straightforwardly be extended to MTS such that they take relations between different attributes into account~\cite{banko2012correlation}. 
The  learned pattern similarity (LPS) is  an exception, based on the identification of segments-occurrence within the time series, which generalizes naturally to MTS~\cite{baydogan2016time} by means of regression trees where a bag-of-words type compressed representation is created, which in turn is used to compute the similarity.

A similarity measure that also is positive semi-definite (psd) is a \textit{kernel}~\cite{shawe2004kernel}. Kernel methods~\cite{shawe2004kernel,Jenssen2010,Jenssen2013} have dominated machine learning and pattern recognition over two decades and have been very successful in many fields~\cite{scholkopf2004kernelC,camps2009kernel,SogueroJBHI}. A main reason for this success is the well understood theory behind such methods, wherein nonlinear data structures can be handled via an implicit or explicit mapping to a reproducing kernel Hilbert space (RKHS)~\cite{scholkopf2001generalized, berlinet2011} defined by the choice of kernel. Prominent examples of kernel methods include the support vector machine (SVM)~\cite{steinwart2008support} and kernel principal component analysis (kPCA)~\cite{scholkopf1997kernel}.

However, many similarities (or equivalently dissimilarities) are non-metric as they do not satisfy the triangle-inequality, and in addition most of them are not psd and therefore not suited for kernel methods~\cite{haasdonk2004learning, marteau2015recursive}.
Attempts have been made to design kernels from non-metric distances such as DTW, of which the global alignment kernel (GAK) is an example \cite{cuturi2011fast}.
There are also promising works on deriving kernels from parametric models, such as the probability product kernel~\cite{jebara2004probability}, Fisher kernel~\cite{jaakkola1999using}, and reservoir based kernels~\cite{chen2013model}. Common to all these methods is however a strong dependence on a correct hyperparameter tuning, which is difficult to obtain in an unsupervised setting. 
Moreover, many of these methods cannot naturally be extended to deal with MTS, as they only capture the similarities between individual attributes and do not model the dependencies between multiple attributes~\cite{banko2012correlation}. 
Equally important, these methods are not designed to handle missing data, an important limitation in many existing scenarios, such as clinical data where MTS originating from Electronic Health Records (EHRs) often contain missing data~\cite{Hayrinen2008,soguero2015data, Soguero2016predicting,liu2016learning}. 

In this work, we propose a new kernel for computing similarities between MTS that is able to handle missing data without having to resort to imputation methods~\cite{donders2006review}. We denote this new measure as the \textit{time series cluster kernel} (TCK). Importantly, the novel kernel is robust and designed in an unsupervised manner, in the sense that no critical hyperparameter choices have to be made by the user. The approach taken is to leverage the missing data handling properties of GMM modeling following the idea of~\cite{Marlin:2012:UPD:2110363.2110408}, where robustness to sparsely sampled data is ensured by extending the GMM using informative prior distributions. 
However, we are not fitting a single parametric model, but rather exploiting an ensemble learning approach~\cite{Dietterich2000} wherein robustness to hyperparameters is ensured by joining the clustering results of many GMM to form the final kernel. This is to some degree analogous to the approaches taken in \cite{cuturi2011autoregressive} and \cite{IzquierdoVerdiguier20151299}. 
More specifically, each GMM is initialized with different numbers of mixture components and random initial conditions and is fit to a randomly chosen subsample of the data, attributes and time segment, through an embarrassingly parallel procedure. 
This also increases the robustness against noise.
The posterior assignments provided by each model are combined to form a kernel matrix, i.e.\ a psd similarity matrix. This opens the door to clustering, classification, etc., of MTS within the framework of kernel methods, benefiting from the vast body of work in that field. 
The procedure is summarized in Fig. \ref{fig:general_procedure}.

\begin{figure}[!t]
  \centering
  \includegraphics[trim = 0mm 1mm 0mm 1mm, clip, width=\columnwidth, keepaspectratio]{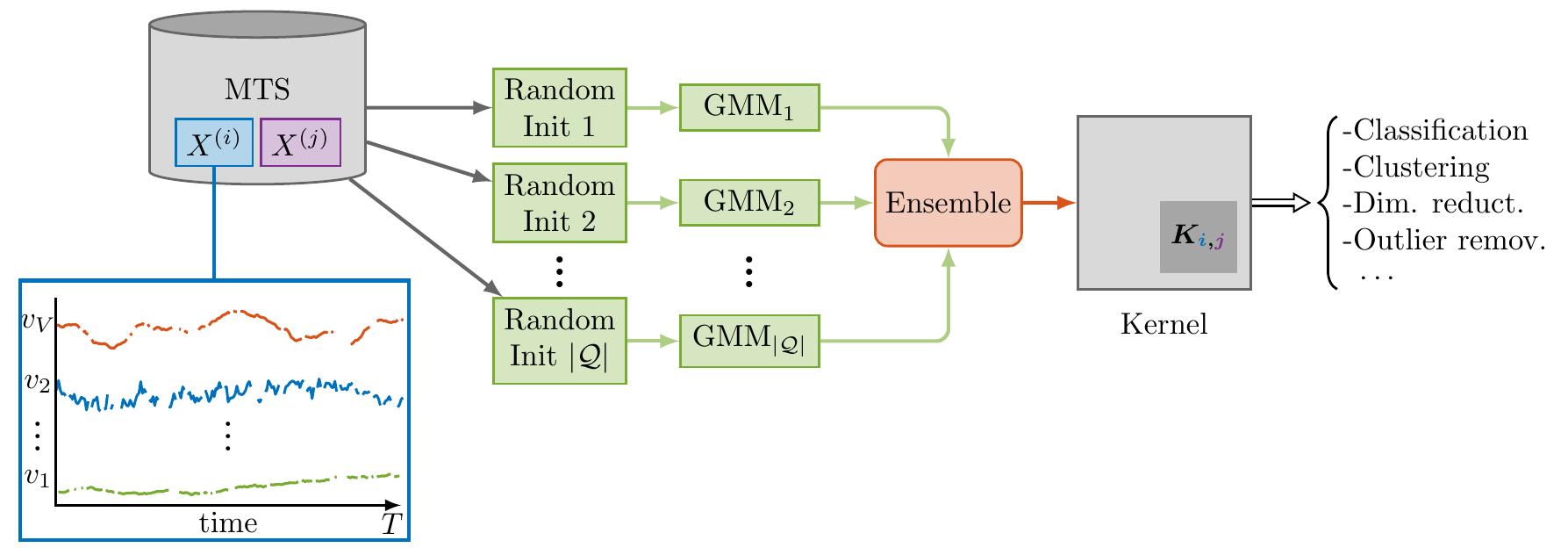}
  \caption{Schematic depiction of the procedure used to compute the TCK.}
  \label{fig:general_procedure}
\end{figure}

In the experimental section we illustrate some of the potentials of the TCK by applying it to classification, clustering, dimensionality reduction and visualization tasks. 
In addition to the widely used DTW, we compare to GAK and LPS. The latter  inherits the decision tree approach to handle missing data, is similar in spirit to the TCK in the sense of being based on an ensemble strategy~\cite{baydogan2016time}, and is considered the state-of-the-art for MTS.  As an additional contribution, we show  in \ref{app:lps} that the LPS is in fact a kernel itself, a result that to the authors best knowledge has not been proven before. The experimental results demonstrate that TCK is very robust to hyperparameter choices, provides competitive results for MTS without missing data and outstanding results for MTS with missing data. This we believe provides a useful tool across a variety of applied domains in MTS analysis, where missing data may be problematic.

The remainder of the paper is organized as follows. In Section~\ref{sec:related_works} we present related works, whereas in Section~\ref{sec:Background}, we give the background needed for building the proposed method. In Section~\ref{sec:TCK} we provide the details of the TCK, whereas in Section~\ref{sec:experiments} we evaluate it on synthetic and real data and compare to LPS and DTW. Section~\ref{sec:conclusions} contains conclusions and future work.

\section{Related work}
\label{sec:related_works}

While several (dis)similarity measures have been defined over the years to compare time series, many of those measures are not psd and hence not suitable for kernel approaches.
In this section we review some of the main kernels functions that have been proposed for time series data.

The simplest possible approach is to treat the time series as vectors and apply well-known kernels such as a linear or radial basis kernel~\cite{scholkopf2001learning}. 
While this approach works well in some circumstances, time dependencies and the relationships among multiple attributes in the MTS are not explicitly modeled.

DTW~\cite{Berndt:1994:UDT:3000850.3000887} is one of the most commonly used similarity measures for UTS and has become the state-of-the-art in many practical applications~\cite{Cai2015181,ratanamahatana2004everything,Lines}. 
Several formulations have been proposed to extend DTW to the multidimensional setting \cite{shokoohi2015generalizing, banko2012correlation}.
Since DTW does not satisfy the triangle inequality, it is not negative definite and, therefore, one cannot obtain a psd kernel by applying an exponential function to it~\cite{berg1984harmonic}. 
Such an indefinite kernel may lead to a non-convex optimization problem (e.g., in an SVM), which hinders the applicability of the model~\cite{haasdonk2004learning}. 
Several approaches have been proposed to limit this drawback at the cost of more complex and costly computations.
In \cite{wu2005learning, Chen:2009:SCC:1577069.1577096} ad hoc spectral transformations were employed to obtain a psd matrix. 
Cuturi et al.~\cite{cuturi2011fast} designed a DTW-based kernel using global alignments (GAK). 
Marteau and Gibet proposed an approach that combines DTW and edit distances with a recursive regularizing term~\cite{ marteau2015recursive}.  

Conversely, there exists a class of (probabilistic) kernels operating on  the configurations of a given parametric model, where the idea is to leverage the way distributions capture similarity.
For instance, the Fisher kernel assumes an underlying generative model to explain all observed data~\cite{jaakkola1999using}. 
The Fisher kernel maps each time series $x$ into a feature vector $U_x$, which is the gradient of the log-likelihood of the generative model fit on the dataset. 
The kernel is defined as $K(x_i, x_j) = U_{x_i}^T\mathcal{I}^{-1}U_{x_j}$, where $\mathcal{I}$ is the fisher information matrix.
Another example is the probability product kernel~\cite{jebara2004probability}, which is evaluated by means of the Bhattacharyya distance in the probability space.
A further representative is the marginalized kernel \cite{tsuda2002marginalized}, designed to deal with objects generated from latent variable models.
Given two visible variables, $x$ and $x'$ and two hidden variables, $h$ and $h'$, at first, a joint kernel $K_z(z,z')$ is defined over the two combined variables $z = (x,h)$ and $z' = (x',h')$. 
Then, a marginalized kernel for visible data is derived from the expectation with respect to hidden variables: $K(x,x') = \sum_h \sum_{h'} p(h|x)p(h'|x')K_z(z,z')$. 
The posterior distributions are in general unknown and are estimated by fitting a parametric model on the data.

In several cases, the assumption of a single parametric model underlying all the data may be too strong.
Additionally, finding the most suitable parametric model is a crucial and often difficult task, which must be repeated every time a new dataset is processed.
This issue is addressed by the autoregressive kernel \cite{cuturi2011autoregressive}, which evaluates the similarity of two time series on the corresponding likelihood profiles of a vector autoregressive model of a given order, across all possible parameter settings, controlled by a prior. 
The kernel is then evaluated as the dot product in the parameter space of such profiles, used as sequence representations. 
The reservoir based kernels~\cite{chen2013model}, map the time series into a high dimensional, dynamical feature space, where a linear readout is trained to discriminate each signal.
These kernels fit reservoir models sharing the same fixed reservoir topology to all time series. 
Since the reservoir provides a rich pool of dynamical features, it is considered to be ``generic'' and, contrarily to kernels based on a single parametric model, it is able to represent a wide variety of dynamics for different datasets. 

The methodology we propose is related to this last class of kernels. In order to create the TCK, we fuse the framework of representing time series via parametric models with similarity and kernel based methods. More specifically, the TCK leverages an ensemble of multiple models that, while they share the same parametric form, are trained on different subset of data, each time with different, randomly chosen initial conditions.

\section{Background}
\label{sec:Background}

In this section we provide a brief background on kernels, introduce the notation adopted in the remainder of the paper and provide the frameworks that our method builds on.
More specifically, we introduce the diagonal covariance GMM for MTS with missing data, the extended GMM framework with empirical priors and the related procedure to estimate the parameters of this model.

\subsection{Background on kernels}
Thorough overviews on kernels can be found in \cite{steinwart2008support, scholkopf2001learning, berg1984harmonic, shawe2004kernel}. Here we briefly review some basic definitions and properties, following \cite{steinwart2008support}.
\begin{mydef} \label{def 1}
  Let $\mathcal{X}$ be a non-empty set. 
  A function $k: \: \mathcal{X} \times \mathcal{X} \to \mathbb{R} $ is a \textit{kernel} if there exists a $\mathbb{R}$-Hilbert space $\mathcal{H}$ and a map $ \Phi: \: \mathcal{X} \to \mathcal{H}$ such that $\forall x, \: y \in \mathcal{X}$,
  $k(x,y) = \langle \Phi (x)  , \Phi(y) \rangle_{\mathcal{H}}.$
\end{mydef}
From this definition it can be shown that a kernel is symmetric and psd, meaning that $ \forall n \geq 1$, $\forall (a_1, \dots, a_n) \in \mathbb{R}^n$, $\forall (x_1,\dots,x_n) \in \mathcal{X}^n$,  $\sum_{i,j} a_i a_j K(x_i,x_j) \geq 0$. 
Of major importance in kernel methods are also the concepts of
reproducing kernels and reproducing kernel Hilbert spaces (RKHS), described by the following definition.
\begin{mydef} \label{def 2}
Let $\mathcal{X}$ be a non-empty set, $\mathcal{H}$ a Hilbert space and  $k: \: \mathcal{X} \times \mathcal{X} \to \mathbb{R}$ a function. $k$ is a \textit{reproducing kernel}, and $ \mathcal{H}$ a \textit{RKHS}, if $\forall x \in \mathcal{X}$, $\forall f \in \mathcal{H}$,  $k(\cdot,x) \in \mathcal{H}$ and $\langle f,  k(\cdot,x) \rangle_{\mathcal{H}} = f(x)$ (reproducing property).
\end{mydef}
These concepts are highly connected to kernels. In fact reproducing kernels are kernels, and every kernel is associated with a unique RKHS (Moore-Aronszajn theorem), and vice-versa.
Moreover, the \textit{representer theorem} states that every function in an RKHS that optimizes an empirical risk function can be expressed as a linear combination of kernels centered at the training points.  These properties have very useful implications, e.g. in an SVM, since an infinite dimensional empirical risk minimization problem can be simplified to a finite dimensional problem and the solution is included in the linear span of the kernel function evaluated at the training points.

\subsection{MTS with missing data}
We define a UTS, $x$, as a sequence of real numbers ordered in time,
$
x = \{x(t) \in \mathbb{R} \: | \: t = 1,2,\dots,T \}.
$
The independent time variable, $t$, is without loss of generality assumed to be discrete and the number of observations in the sequence, $T$, is the \textit{length} of the UTS. 

A  MTS $X$ is defined as a (finite) sequence of UTS,
$
X = \{ x_v \in \mathbb{R}^T \: | \: v = 1,2,\dots,V\},
$
where each attribute, $x_v$, is a UTS of length $T$. The number of UTS, $V$, is the \textit{dimension} of $X$. The length $T$ of the UTS $x_v$ is also the length of the MTS $X$. Hence,  a $V$--dimensional MTS, $X$, of length $T$ can be represented as a matrix in $\mathbb{R}^{V \times T}$. 

Given a dataset of $N$ MTS, we denote $X^{(n)}$ the $n$-th MTS. 
An incompletely observed MTS is described by the pair $(X^{(n)}, R^{(n)})$, where $R^{(n)}$ is a binary MTS 
with entry $r_v^{(n)}(t) = 0$ if the realization $x_v^{(n)}(t)$ is missing and $r_v^{(n)}(t) = 1$ if it is observed.

\subsection{Diagonal covariance GMM for MTS with missing data}
A GMM is a mixture of $G$ components, with each component belonging to a normal distribution. Hence, the components are described by the mixing coefficients $\theta_g$,  means $\mu_g$  and covariances  $\Sigma_{g}$. 
The mixing coefficients $\theta_g$ satisfy $0 \leq \theta_g \leq 1$ and $ \sum_{g=1}^G \theta_g = 1$. 

We formulate the GMM in terms of a latent random variable $ Z $, represented as a $G$-dimensional one-hot vector, whose marginal distribution is given by 
$
p(Z \: | \: \Theta ) = \prod\limits_{g=1}^G  \theta_g^{Z_g}.
$
The conditional distribution for the MTS $X$, given $Z$, is a multivariate normal distribution,
$
p(X \: | \: Z_g = 1, \: \Theta ) =  \mathcal{N} \left(X \: | \: \mu_{g}, \Sigma_{g}\right).
$
Hence, the GMM can be described by its probability density function (pdf), given by 
\begin{equation} \label{eq: p(x) gmm}
p(X) = \sum_Z p(Z) p(X \: | \:  Z, \: \Theta ) = \sum_{g=1}^G \theta_g \mathcal{N} \left(X \: | \: \mu_g, \Sigma_{g}\right).
\end{equation}
The GMM described by Eq.~\eqref{eq: p(x) gmm} holds for completely observed data and a general covariance. However, in the diagonal covariance GMM considered in this work, the following assumptions are made.  The MTS are characterized by time-dependent means, expressed by $\mu_g = \{ \mu_{gv} \in  \mathbb{R}^T \: | \: v = 1,...,V\}$, where  $\mu_{gv}$ is a UTS, whereas the covariances are constrained to be constant over time.  
Accordingly, the covariance matrix is $\Sigma_g = diag\{\sigma_{g1}^2,...,\sigma_{gV}^2\}$, being $\sigma_{gv}^2$ the variance of attribute $v$.
Moreover, the data is assumed to be \textit{missing at random} (MAR), i.e. the missing elements are only dependent on the observed values.

Under these assumptions, missing data can be analytically integrated away, such that imputation is not needed~\cite{rubin1976inference}, and the pdf for the incompletely observed MTS $(X, R)$ is given by
\begin{equation} \label{eq: p(x) gmm diag}
p(X \: | \: R, \: \Theta ) = \sum_{g=1}^G \theta_g \prod_{v=1}^V \prod_{t=1}^T  \mathcal{N} (x_v(t) \: | \: \mu_{gv}(t), \sigma_{gv})^{r_v(t) }
\end{equation}
The conditional probability of $Z$ given $X$, can be found using Bayes' theorem,
\begin{equation} \label{eq: p(z|x) posterior}
\pi_{g} \equiv P(Z_g = 1 \: | \: X, \: R, \: \Theta )  
= \frac{ \theta_g \prod_{v=1}^V \prod_{t=1}^T  \mathcal{N} \left(x_v(t) \: | \: \mu_{gv}(t), \sigma_{gv}\right)^{r_v(t) }}{\sum_{g=1}^G \theta_g \prod_{v=1}^V \prod_{t=1}^T  \mathcal{N} \left(x_v(t) \: | \: \mu_{gv}(t), \sigma_{gv}\right)^{r_v(t) }}.
\end{equation}
$\theta_g$ can be thought of as the prior probability of $X$ belonging to component $g$, and therefore Eq.~\eqref{eq: p(z|x) posterior} describes the corresponding posterior probability. 

To fit a GMM to a dataset, one needs to learn the parameters $ \Theta =  \{ \theta_g, \: \mu_{g}, \sigma_{g} \}_ {g=1}^G $. The standard way to do this is to perform maximum likelihood expectation maximization (EM)~\cite{bilmes1998gentle}. 
However, to be able to deal with large amounts of missing data, one can introduce informative priors for the parameters and estimate them using maximum a posteriori expectation maximization (MAP-EM)~\cite{Marlin:2012:UPD:2110363.2110408}.  
This ensures each cluster mean to be smooth over time and clusters containing few time series, to have parameters similar to the mean and covariance computed over the whole dataset.
We summarize this procedure in the next subsection (see Ref. \cite{Marlin:2012:UPD:2110363.2110408} for details).
 
\subsection{MAP-EM diagonal covariance GMM augmented with empirical prior}
To enforce smoothness,  a kernel-based Gaussian prior is defined for the mean,
$
P(\mu_{gv}) = \mathcal{N} \left(\mu_{gv} \: | \: m_{v}, \: S_{v}\right).
$
$m_{v}$ are the empirical means and the prior covariance matrices, $S_{v}$, are defined as
$
S_{v} = s_{v} \mathcal{K},
$
where $s_{v}$ are empirical standard deviations and $\mathcal{K}$ is a kernel matrix, whose elements are
$
\mathcal{K}_{tt'} = b_0 \exp (-a_0(t-t')^2), \quad t, \, t' = 1,\dots,T.
$
$a_0$, $b_0$ are user-defined hyperparameters.
 An inverse Gamma distribution prior is put on the standard deviation $\sigma_{gv}$,
$
P(\sigma_{gv}) \propto \sigma_{gv}^{-N_0} \exp \left(- \frac{N_0 s_v}{2 \sigma_{gv}^2} \right),
$
where $N_0$ is a user-defined hyperparameter. We denote $\Omega = \{ a_0, b_0, N_0\}$ the set of hyperparameters.
Estimates of parameters $\Theta$ are found using MAP-EM ~\cite{dempster1977maximum, mclachlan2007algorithm}, according to Algorithm \ref{alg:algorithm 2}.
%
\begin{algorithm}[!ht]
\small
\caption{MAP-EM diagonal covariance GMM}
\label{alg:algorithm 2}
\begin{algorithmic}[1]
\Require Dataset $\{(X^{(n)}, R^{(n)} )  \}_{n=1}^N$, hyperparameters $\Omega$ and number of mixtures $G$.
\State Initialize the parameters $\Theta$.
\State E-step. For each MTS $X^{(n)}$, evaluate the posterior probabilities using current parameter estimates,
$
\pi_{g}^{(n)} 
= P(Z_g = 1 \: | \: X^{(n)}, \: R^{(n)}, \: \Theta ).
$
\State M-step. Update parameters using the current posteriors
\begin{align*}
\theta_g &= N^{-1} \textstyle \sum_{n=1}^{N} \pi_{g}^{(n)} 
\\
\sigma_{gv}^2 &= \bigg( N_0 +  \sum_{n=1}^N \sum_{t=1}^T r^{(n)}_v(t) \; \pi_{g}^{(n)}\, \bigg)^{-1} \bigg( N_0 s^2_{v} + \sum_{n=1}^N \sum_{t=1}^T r^{(n)}_v(t) \; \pi^{(n)}_{g} \big(x^{(n)}_v(t) - \mu_{gv}(t)\big)^2 \bigg) 
\\
\mu_{gv} &= \left( S^{-1}_{v} + \sigma^{-2}_{gv} \textstyle \sum\limits_{n=1}^N  \pi^{(n)}_{g} \text{diag}(r^{(n)}_{v} ) \right)^{-1} 
  \left( S^{-1}_{v} m_{v} +  \sigma^{-2}_{gv} \textstyle \sum\limits_{n=1}^N  \pi^{(n)}_{g} \text{diag}(r^{(n)}_{v} ) \: x^{(n)}_v \right)   
\end{align*} 
\State Repeat step 2-3 until convergence.
\Ensure Posteriors $ \Pi^{(n)} \equiv \left( \pi_1^{(n)},\dots,\pi_G^{(n)} \right)^T $ and mixture parameters $ \Theta $.
\end{algorithmic}
\end{algorithm}
%
\section{Time series cluster kernel (TCK)}
\label{sec:TCK}
Methods based on GMM, in conjunction with EM, have been successfully applied in different contexts, such as density estimation and clustering~\cite{statlearning}.   
As a major drawback, these methods often require to solve a non-convex optimization problem, whose outcome depends on the initial conditions~\cite{mclachlan2007algorithm, wu1983convergence}.
The model described in the previous section depends on initialization of parameters $\Theta$ and the chosen number of clusters $G$~\cite{Marlin:2012:UPD:2110363.2110408}. Moreover, three different hyper-parameters, $a_0, b_0, N_0$, have to be set. In particular,  modeling the covariance in time is difficult; choosing a too small hyperparameter $a_0$ leads to a degenerate covariance matrix that cannot be inverted. 
On the other hand, a too large value would basically remove the covariance such that the prior knowledge is not incorporated. 
Furthermore, a single GMM provides a limited descriptive flexibility, due to its parametric nature.

Ensemble learning has been adopted both in classification, where classifiers are combined through e.g. bagging or boosting~\cite{breiman1996bagging, freund1996experiments}, and clustering~\cite{Fred02evidenceaccumulation,Monti2003,Strehl:2003}.  
Typically, in ensemble clustering one integrates the outcomes of the same algorithm as it processes different data subsets, being configured with different parameters or initial conditions, in order to capture local and global structures in the underlying data~\cite{Monti2003,vega2011survey} and to provide a more stable and robust final clustering result.
Hence, the idea is to combine the results of many weaker models to deliver an estimator with statistical, computational and representational advantages~\cite{Dietterich2000}, which are lower variance, lower sensitivity to local optima and a broader span of representable functions, respectively.

We propose an ensemble approach that combines multiple GMM, whose diversity is ensured by training the models on subsamples of data, attributes and time segments, using different numbers of mixture components and random initialization of $\Theta$ and hyperparameters. 
Thus, we generate a model robust to parameters and noise, also capable of capturing different levels of granularity in the data. 
To ensure robustness to missing data, we use the diagonal covariance GMM augmented with the informative priors described in the previous section as base models in the ensemble.

Potentially, we could have followed the idea of \cite{glodek2013ensemble}  to create a density function from an ensemble of GMM. 
Even though several methods rely on density estimation~\cite{statlearning}, we aim on deriving a \textit{similarity measure}, which provides a general-purpose data representation, fundamental in many applications in time-series analysis, such as classification, clustering, outlier detection and dimensionality reduction~\cite{han2011data}.

Moreover, we ensure the similarity measure to be psd, i.e. a \textit{kernel}. 
Specifically, the linear span of posterior distributions $ \pi_{g} $, formed as  $G$-vectors, with ordinary inner product, constitutes a Hilbert space. 
We explicitly let the \textit{feature map} $\Phi$ be these posteriors. 
Hence, the TCK is an inner product between two distributions and therefore forms a linear kernel in the space of posterior distributions. Given an ensemble of GMM, we create the TCK using the fact that the sum of kernels is also a kernel.

\subsection{Method details}
To build the TCK kernel matrix, we first fit different diagonal covariance GMM to the MTS dataset.
To ensure diversity, each GMM model uses a number of components from the interval $[2,C]$. For each number of components, we apply $Q$ different  random initial conditions and hyperparameters. We let $\mathcal{Q} = \{ q = (q_1,q_2) \: | \: q_1=1,\dots Q, \: q_2 = 2,\dots, C \} $ be the index set keeping track of initial conditions and hyperparameters ($q_1$), and the number of components ($q_2$).
Moreover, each model is trained on a random subset of MTS, accounting only a random subset of variables $\mathcal{V}$, with cardinality $ |\mathcal{V}| \leq V$, over a randomly chosen time segment $\mathcal{T}, |\mathcal{T}| \leq T$. 
The inner products of the posterior distributions from each mixture component are then added up to build the TCK kernel matrix, according to the ensemble strategy~\cite{ensemble}.
Algorithm~\ref{alg:algorithm} describes the details of the method.

\begin{algorithm}[h!]
\small
\caption{TCK kernel. Training phase.}
\label{alg:algorithm}
\begin{algorithmic}[1]
\Require Training data $ \{ (X^{(n)}, R^{(n)} )  \}_{n=1}^N$ , $Q$ initializations, $C$ maximal number of mixture components.
\State Initialize kernel matrix $K = 0_{N \times N}  $.
\For{$q \in \mathcal{Q}$}
\State Compute posteriors $ \Pi^{(n)}(q) \equiv \left( \pi_1^{(n)},\dots,\pi_{q_2}^{(n)} \right)^T $, $n = 1,\dots, N$, by applying Algorithm~\ref{alg:algorithm 2} with $q_2$ clusters and by randomly selecting,
\begin{itemize}
\item[i.] hyperparameters $\Omega(q) $,
\item[ii.] a time segment $ \mathcal{T}(q)  $ of length $T_{min} \leq  |\mathcal{T}(q)| \: \leq \: T_{max}$,
\item[iii.] a subset of attributes, $\mathcal{V}(q) \subset (1,\dots,V) $, with cardinality $V_{min} \leq |\mathcal{V}(q)| \leq V_{max}$,
\item[iv.] a subset of MTS, $\eta(q) \subset (1,\dots,N) $, with cardinality $N_{min} \leq |\eta(q)| \leq N$,
\item[v.] initialization of the mixture parameters $ \Theta(q) $.
\end{itemize}
\State Update kernel matrix, $K_{nm} = K_{nm} + \Pi^{(n)}(q)^T \Pi^{(m)}(q) $,  $\quad n, \, m = 1,\dots , N$.
\EndFor
\Ensure $K$ TCK kernel matrix, time segments $\mathcal{T}(q)  $, subsets of attributes $\mathcal{V}(q)$, subsets of MTS $\eta(q)$, GMM parameters $ \Theta(q)$  and posteriors $\Pi^{(n)}(q) $.
\end{algorithmic}
\end{algorithm}

In order to be able to compute similarities with MTS not available at the training phase, one needs to store the time segments $\mathcal{T}(q)$, subsets of attributes $\mathcal{V}(q)$, GMM parameters $ \Theta(q)$  and posteriors $\Pi^{(n)}(q)$.
Then, the TCK for such out-of-sample MTS is evaluated according to Algorithm~\ref{alg:algorithm out of sample}.

\begin{algorithm}[h!]
\setstretch{1}
\small
\caption{TCK kernel. Test phase.}
\label{alg:algorithm out of sample}
\begin{algorithmic}[1]
  \Require Test set $\big \{ (X^{*(m)}, R^{*(m)} )\big \}_{m=1}^M$, time segments $\mathcal{T}(q)$, subsets of attributes $\mathcal{V}(q)$,  subsets of MTS $\eta(q)$, GMM parameters $  \Theta(q) $  and posteriors $\Pi^{(n)}(q) $. 
  \State Initialize kernel matrix $K^* = 0_{N \times M} $. 
  \For{$q \in \mathcal{Q}$}
  \State Compute posteriors $\Pi^{*(m)}(q) $, $m=1,\dots,M$ by applying Eq.~\eqref{eq: p(z|x) posterior}  with mixture parameters $ \Theta(q)$.
  \State Update kernel matrix, $K^*_{nm} = K^*_{nm} + \Pi^{(n)}(q)^T \Pi^{*(m)}(q) $, $\quad n = 1,\dots , N$, $m=1,\dots,M$.
  \EndFor
  \Ensure $K^*$ TCK test kernel matrix
\end{algorithmic}
\end{algorithm}

\subsection{Parameters and robustness} 
\label{sec: def param} 
The maximal number of mixture components in the GMM, $C$, should be set high enough to capture the local structure in the data. On the other hand, it should be set reasonably lower than the number of MTS in the dataset in order to be able to estimate the parameters of the GMM. 
Intuitively, a high number of realizations $Q$ improves the robustness of the ensemble of clusterings. 
However, more realizations comes at the expense of an increased computational cost. 
In the end of next section we show experimentally that it is not critical to correctly tune these two hyperparameters as they just have to be set high enough.

Through empirical evaluations we have seen that none the other hyperparameters are critical. We set default hyperparameters as follows.
The hyperparameters are sampled according to a uniform distribution from pre-defined intervals. Specifically, we let
$ a_0 \in (0.001,1)$, $b_0  \in (0.005,0.2)$ and $N_0  \in (0.001,0.2)$.
The subsets of attributes are selected randomly by sampling according to a uniform distribution from $\{2,\dots,V_{max}\}$. The lower bound is set to two, since we want to allow the algorithm to learn possible inter-dependencies between at least two attributes. The time segments are sampled from $ \{1, \dots, T\}$ and the length of the segments are allowed to vary between $T_{min}$ and $T_{max}$. In order to be able to capture some trends in the data we set $T_{min}= 6$. We let the minimal size of the subset of MTS be 80 percent of the dataset.

We do acknowledge that for long MTS the proposed method becomes computationally demanding, as the complexity scales as $\mathcal{O}(T^3)$. 
Moreover, there is a potential issue in Eq.~\eqref{eq: p(z|x) posterior} since multiplying together very small numbers both in the nominator and denominator could yield to numerically unstable expressions close to $0/0$. While there is no theoretical problem, since the normal distribution is never exactly zero, the posterior for some outliers could have a value close to the numerical precision.
In fact, since the posterior assignments are numbers lower than $1$, the value of their product can be small if $V$ and $T$ are large.
We address this issue by putting upper thresholds on the length of the time segments, $T_{max}$, and number of attributes, $V_{max}$, which is justified by the fact that the TCK is learned using an ensemble strategy.
Moreover, to avoid problems for outliers we put a lower bound on the value for the conditional distribution for $x_v(t)$ at $\mathcal{N} \left(3 \: | \: 0, 1\right)$. In fact, it is very unlikely that a data point generated from a normal distribution is more than three standard deviations away from the mean. 

\subsection{Algorithmic complexity}
\paragraph{Training complexity} 
The computational complexity of the EM procedure is dominated by the update of the mean, whose cost is $ \mathcal{O}(2T^3 + N V T^2 )$. 
Hence, for $G$ components and $I$ iterations, the total cost is $\mathcal{O}\left(I G (2T^3 + N V T^2 ) \right)$.
The computation of the TCK kernel involves both the MAP-EM estimation and the kernel matrix generation for each $q \in \mathcal{Q}$, whose cost is upper-bounded by $\mathcal{O}\left( N^2 C \right)$. 
The cost of a single evaluation $q$ is therefore bounded by $\mathcal{O}\left( N^2 C+ I C (2T_{max}^3 + N V_{max} T_{max}^2 ) \right)$. 
We underline that the effective computational time can be reduced substantially through parallelization, since each instance $q \in \mathcal{Q}$ can be evaluated independently.
As we can see, the cost has a quadratic dependence on $N$, which becomes the dominating term in large datasets.  
We note that in spectral methods the eigen-decomposition costs $\mathcal{O}(N^3)$ with a consequent complexity higher than TCK for large $N$. 

\paragraph{Testing complexity} 
For a test MTS one has to evaluate $|\mathcal{Q}|$ posteriors, with a complexity bounded by $\mathcal{O}(C V_{max} T_{max})$. 
The complexity of computing the similarity with the $N$ training MTS is bounded by $\mathcal{O}\left( N C \right)$. 
Hence, for each $q \in \mathcal{Q}$, the testing complexity is $\mathcal{O}(N C  + C V_{max} T_{max})$. 
Note that also the test phase is embarrassingly parallelizable.

\subsection{Properties}
In this section we demonstrate that TCK is a proper kernel and we discuss some of its properties. 
We let $\mathcal{X} = \mathbb{R}^{V \times T}$ be the space of $V$-variate MTS of length $T$ and $K: \: \mathcal{X} \times \mathcal{X} \to \mathbb{R}$ be the TCK. 

\begin{theorem}
  $K$ is a kernel.
\end{theorem}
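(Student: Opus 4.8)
The plan is to exhibit, for the realized ensemble, an explicit feature map into a Hilbert space and then invoke Definition~\ref{def 1} together with the elementary fact that a finite sum of kernels is again a kernel. Fix an index $q=(q_1,q_2)\in\mathcal{Q}$ and consider the diagonal covariance GMM that Algorithm~\ref{alg:algorithm} fits for this $q$, with estimated parameters $\Theta(q)$, selected time segment $\mathcal{T}(q)$ and attribute subset $\mathcal{V}(q)$. Define $\Phi_q\colon\mathcal{X}\to\mathbb{R}^{q_2}$ by letting $\Phi_q(X)$ be the posterior vector $\big(P(Z_1=1\mid X,R,\Theta(q)),\dots,P(Z_{q_2}=1\mid X,R,\Theta(q))\big)^T$ computed via Eq.~\eqref{eq: p(z|x) posterior}, with the products restricted to $\mathcal{T}(q)$ and $\mathcal{V}(q)$. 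Once the random draws of Algorithm~\ref{alg:algorithm} have been made, this is a genuine deterministic function on all of $\mathcal{X}$; for a training MTS $X^{(n)}$ it returns exactly $\Pi^{(n)}(q)$, because the converged E-step of Algorithm~\ref{alg:algorithm 2} evaluates precisely this closed-form expression, and the same map is used in Algorithm~\ref{alg:algorithm out of sample} for out-of-sample MTS, so $K$ is consistently defined on $\mathcal{X}\times\mathcal{X}$.

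Since $\mathbb{R}^{q_2}$ with the ordinary inner product is a real Hilbert space, Definition~\ref{def 1} immediately gives that $k_q(X,Y) := \langle \Phi_q(X),\Phi_q(Y)\rangle = \Pi(X;q)^T\Pi(Y;q)$ is a kernel on $\mathcal{X}$ (in particular symmetric and psd). It then remains to note that $K=\sum_{q\in\mathcal{Q}}k_q$, which is exactly the update in Algorithm~\ref{alg:algorithm}, and that $\mathcal{Q}$ is finite. A finite sum of kernels is a kernel: taking the direct-sum Hilbert space $\mathcal{H}=\bigoplus_{q\in\mathcal{Q}}\mathbb{R}^{q_2}$ and the map $\Phi(X)=(\Phi_q(X))_{q\in\mathcal{Q}}$ yields $\langle\Phi(X),\Phi(Y)\rangle_{\mathcal{H}}=\sum_{q}\langle\Phi_q(X),\Phi_q(Y)\rangle=K(X,Y)$, so $K$ is a kernel by Definition~\ref{def 1}. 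Equivalently, one may argue at the Gram-matrix level, since a sum of psd matrices is psd.

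I do not expect a genuine obstacle: the argument is essentially bookkeeping, and symmetry and positive semi-definiteness are inherited automatically from the inner-product representation, as noted just after Definition~\ref{def 1}. The one point deserving a sentence of care is the well-definedness claim above, namely that the posteriors produced by the MAP-EM routine for the training data coincide with the closed-form posterior map of Eq.~\eqref{eq: p(z|x) posterior} evaluated at the final parameters, so that $\Phi_q$ is a bona fide function on the input space rather than merely a list of precomputed vectors; this is what makes the feature-map argument apply uniformly to both the training and test phases.
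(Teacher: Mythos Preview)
Your proof is correct and follows essentially the same route as the paper: show each $k_q$ is a kernel via an explicit feature map into a finite-dimensional Euclidean space, then pass to the direct sum $\mathcal{H}=\bigoplus_{q\in\mathcal{Q}}\mathbb{R}^{q_2}$ to obtain the feature map for $K=\sum_q k_q$. The only cosmetic differences are that the paper phrases the target space as the linear span $\mathcal{H}_q$ of the posterior vectors rather than all of $\mathbb{R}^{q_2}$, and that you add an explicit remark on the well-definedness of $\Phi_q$ as a function on $\mathcal{X}$ (so that training and test evaluations agree), a point the paper leaves implicit.
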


\begin{proof}
According to the definition of TCK, we have
$
  K(X^{(n)},X^{(m)})  = \sum_{q \in \mathcal{Q}} k_q(X^{(n)},X^{(m)}),
$
where $k_q(X^{(n)},X^{(m)}) = \Pi^{(n)}(q)^T \Pi^{(m)}(q)$. 
Since the sum of kernels is a kernel, it is sufficient to demonstrate that $k_q$ is a kernel. 
We define 
$
\mathcal{H}_q =  \{ f = \sum_{n=1}^N \alpha_n \Pi^{(n)}(q)  \: \big| \: N \in \mathbb{N}, \: X^{(1)},\dots,X^{(N)} \in \mathcal{X}, \: \alpha_1,\dots,\alpha_N \in \mathbb{R} \}.
$
Since $\mathcal{H}_q$ is the linear span of posterior probability distributions, it is closed under addition and scalar multiplication and therefore a vector space.
Furthermore, we define an inner product in $\mathcal{H}_q $ as the ordinary dot-product in $\mathbb{R}^{q_2}$, $\langle f , f' \rangle_{\mathcal{H}_q} = f^T f'$. 

\begin{lemma} 
	$\mathcal{H}_q$ with $\langle \cdot  , \cdot \rangle_{\mathcal{H}_q}$ is a Hilbert space.
\end{lemma}

\begin{proof}
	$\mathcal{H}_q$ is equipped with the ordinary dot product, has finite dimension $q_2$ and therefore is isometric to $\mathbb{R}^{q_2}$ .
\end{proof}

\begin{lemma}
	$k_q$ is a kernel.
\end{lemma}

\begin{proof}
	Let $\Phi_q: \:  \mathcal{X} \to \mathcal{H}_q$ be the mapping given by $X \rightarrow \Pi(q)$. It follows that
$
\langle \Phi_q (X^{(n)})  , \Phi_q(X^{(m)}) \rangle_{\mathcal{H}_q} = \langle \Pi(q)^{(n)} , \Pi(q)^{(m)} \rangle_{\mathcal{H}_q} = (\Pi(q)^{(n)})^T \Pi(q)^{(m)} = k_q(X^{(n)},X^{(m)}).
$
\end{proof}

Now, let $ \mathcal{H}$ be the Hilbert space defined via direct sum, $ \mathcal{H} = \bigoplus\limits_{q \in \mathcal{Q}} \mathcal{H}_q $.
$\mathcal{H}$ consists of the set of all ordered tuples $\mathbf{\Pi}^{(n)} = (\Pi^{(n)}(1),\Pi^{(n)}(2), \dots, \Pi^{(n)}(|\mathcal{Q}|))$. 
An induced inner product on  $\mathcal{H}$ is 
$
  \langle \mathbf{\Pi}^{(n)}, \mathbf{\Pi}^{(m)} \rangle_{\mathcal{H}} = \sum_{q \in \mathcal{Q}} \langle \Pi^{(n)}(q)  , \Pi^{(m)}(q) \rangle_{\mathcal{H}_q}.
$
If we let $\Phi: \:  \mathcal{X} \to \mathcal{H} $ be the mapping given by $X^{(n)} \rightarrow \mathbf{\Pi}^{(n)}$, it follows that
$
\langle \Phi (X^{(n)})  , \Phi(X^{(m)}) \rangle_{\mathcal{H}} = \langle \mathbf{\Pi}^{(n)}, \mathbf{\Pi}^{(m)} \rangle_{\mathcal{H}} = 
  \sum_{q \in \mathcal{Q}} k_q(X^{(n)},X^{(m)}) = K(X^{(n)},X^{(m)}). 
$
\end{proof}
This result and its proof unveil important properties of TCK. 
(i) $K$ is symmetric and psd; 
(ii) the feature map $\Phi$ is provided explicitly;
(iii) $K$ is a linear kernel in the Hilbert space of posterior probability distributions $\mathcal{H}$;
(iv) the induced distance $d$, given by
\begin{align*}
d^2(X^{(n)}, X^{(m)}) &= \langle \Phi (X^{(n)}) - \Phi(X^{(m)})  , \Phi(X^{(m)}) - \Phi(X^{(m)}) \rangle_{\mathcal{H}} \\
&= K(X^{(n)},X^{(n)}) - 2 K(X^{(n)},X^{(m)}) + K(X^{(m)},X^{(m)})
\end{align*}
is a pseudo-metric as it satisfies the triangle inequality, takes non-negative values, but, in theory, it can vanish for $ X^{(n)} \neq X^{(m)}$.

\section{Experiments and results}
\label{sec:experiments}

The proposed kernel is very general and can be used as input in many learning algorithms. It is beyond the scope of this paper to illustrate all properties and possible applications for TCK. Therefore we restricted ourselves to classification, with and without missing data, dimensionality reduction and visualization.
We applied the proposed method to one synthetic and several benchmark datasets.
The TCK was compared to three other similarity measures, DTW, LPS and the fast global alignment kernel (GAK)~\cite{cuturi2011fast}.
DTW was extended to the multivariate case using both the \textit{independent} (DTW i) and \textit{dependent} (DTW d) version~\cite{shokoohi2015generalizing}. 
To evaluate the robustness of the similarity measures, they were trained unsupervisedly also in classification experiments, without tuning hyperparameters by cross-validation.
In any case, cross-validation is not trivial in multivariate DTW, as the best window size based on individual attributes is not well defined~\cite{baydogan2016time}. 

For the classification task, to not introduce any additional, unnecessary parameters, we chose to use a nearest-neighbor (1NN) classifier. This is a standard choice in time series classification literature~\cite{Kate2016}. 
Even though the proposed method provides a kernel, by doing so, it is easier to compare the different properties of the  similarity measures directly to each other. Performance was measured in terms of \textit{classification accuracy} on a test set.  

To perform dimensionality reduction we applied kPCA using the two largest eigenvalues of the kernel matrices. 
The different kernels were visually assessed by plotting the resulting mappings with the class information color-coded.
  
The TCK was implemented in R and Matlab, and the code is made publicly available at~\cite{MikalsenTCK}. In the experiments we used the same parameters on all datasets. We let $C = 40$ and $Q = 30$. For the rest of the parameters we used the default values discussed in Section~\ref{sec: def param}. The only exception is for datasets with less than 100 MTS, in that case we let the maximal number of mixtures be $C = 10$. The hyperparameter dependency is discussed more thoroughly in the end of this section.

For the LPS we used the Matlab implementation provided by Baydogan~\cite{Baydogan}.
We set the number of trees to 200 and number of segments to 5. Since many of the time series we considered were short, we set the minimal segment length to 15 percent of the length of MTS in the dataset. The remaining hyperparameters were set to default. For the DTW we used the \textit{R} package \textit{dtw}~\cite{Giorgino}. The GAK was run using the Matlab Mex implementation provided by Cuturi~\cite{Cuturi}. In accordance with~\cite{Cuturi} we set the bandwidth $\sigma$ to two times the median distance of the MTS in the training set, scaled by the square root of the median length of the MTS. The triangular parameter was set to 0.2 times the median length.

In contrast to the TCK and LPS, the
DTW and GAK do not naturally deal with missing data and therefore we imputed the overall mean for each attribute and time interval.

\subsection{Synthetic example: Vector autoregressive model}

We first applied TCK in a controlled experiment, where we generated a synthetic MTS dataset with two classes from a first-order vector autoregressive model, VAR(1)~\cite{shumway}, given by 
\begin{align}
\begin{pmatrix}
x_1(t) \\
x_2(t)
\end{pmatrix}
=
\begin{pmatrix}
\alpha_1 \\
\alpha_2
\end{pmatrix}
+
\begin{pmatrix}
\rho_x & 0\\
0 & \rho_y
\end{pmatrix}
\begin{pmatrix}
x_1(t-1) \\
x_2(t-1)
\end{pmatrix}
+
\begin{pmatrix}
\xi_1(t) \\
\xi_2(t)
\end{pmatrix}
\end{align}
To make $x_1(t)$ and $x_2(t)$ correlated with $\mathrm{corr}(x_1(t),x_2(t)) = \rho$, we chose the noise term s.t., 
$
\mathrm{corr}\left(\xi_1(t),\xi_2(t)\right) = \rho \: (1- \rho_x \rho_y) \: [(1-\rho_x^2)(1-\rho_y^2)]^{-1}.
$
For the first class, we generated 100 two-variate MTS of length 50 for the training and 100 for the test, from the VAR(1)-model with parameters $\rho = \rho_x = \rho_y = 0.8$ and $\mathbb{E}[(x_1(t), x_2(t))^T] = (0.5, -0.5)^T$.
Analogously, the MTS of the second class were generated using parameters $\rho = -0.8$, $\rho_x = \rho_y = 0.6$ and $\mathbb{E}[(x_1(t), x_2(t))^T] = (0, 0)^T$.
On these synthetic data, in addition to dimensionality reduction and classification with and without missing data, we also performed spectral clustering on the TCK matrix in order to be able to compare TCK directly to a single diagonal covariance GMM optimized using MAP-EM.

\paragraph{Clustering}
Clustering performance was measured in terms of \textit{adjusted rand index} (ARI)~\cite{Hubert1985} and \textit{clustering accuracy} (CA).
CA is the maximum bipartite matching ($map$) between cluster labels ($l_i$) and ground-truth labels ($y_i$), defined as
$
\mathrm{CA} = N^{-1} \sum_{i=1}^N \delta (y_i, \mathrm{map}(l_i)),
$
where $\delta(\cdot,\cdot)$ is the Kronecker delta and $\mathrm{map}(\cdot)$ is computed with the Hungarian algorithm~\cite{Kuhn55thehungarian}.

\begin{SCtable*}[0.5][!t]
\centering
\caption{Clustering performance, measured in terms of CA and ARI, on simulated VAR(1) datasets for TCK and GMM.}\label{tab: var(1) class and clust}
\begin{tabular}{lllll}
\cmidrule[1.5pt]{1-5}
 \multicolumn{1}{c}{} &  TCK  & GMM & TCK$_{UTS}$ & TCK$_{\rho = 0}$ \\ 
\cmidrule[.5pt]{1-5}
CA & 0.990 &0.910 &0.775 & 0.800 \\
ARI  & 0.961 & 0.671  & 0.299 & 0.357 \\ 
\cmidrule[1.5pt]{1-5}
\end{tabular}
\end{SCtable*}
 
The single GMM was run with $a_0 = 0.1$, $b_0= 0.1$ and $N_0 = 0.01$.  
Tab.~\ref{tab: var(1) class and clust} show that spectral clustering on the TCK achieves a considerable improvement compared to GMM clustering and verify the efficacy of the ensemble and the kernel approach with respect to a single GMM.  
Additionally, we evaluated TCK by concatenating the MTS as a long vector and thereby treating the MTS as an UTS (TCK$_{UTS}$) and on a different VAR(1) dataset with the attributes uncorrelated (TCK$_{\rho = 0}$). The superior performance of TCK with respect to these two approaches illustrates that, in addition to accounting for similarities within the same attribute, TCK also leverages interaction effects between different attributes in the MTS to improve clustering results.

\paragraph{Dimensionality reduction and visualization}
%
\begin{SCfigure*}[0.5][!t]
\centering
  \includegraphics[width=0.78\textwidth, keepaspectratio=true]{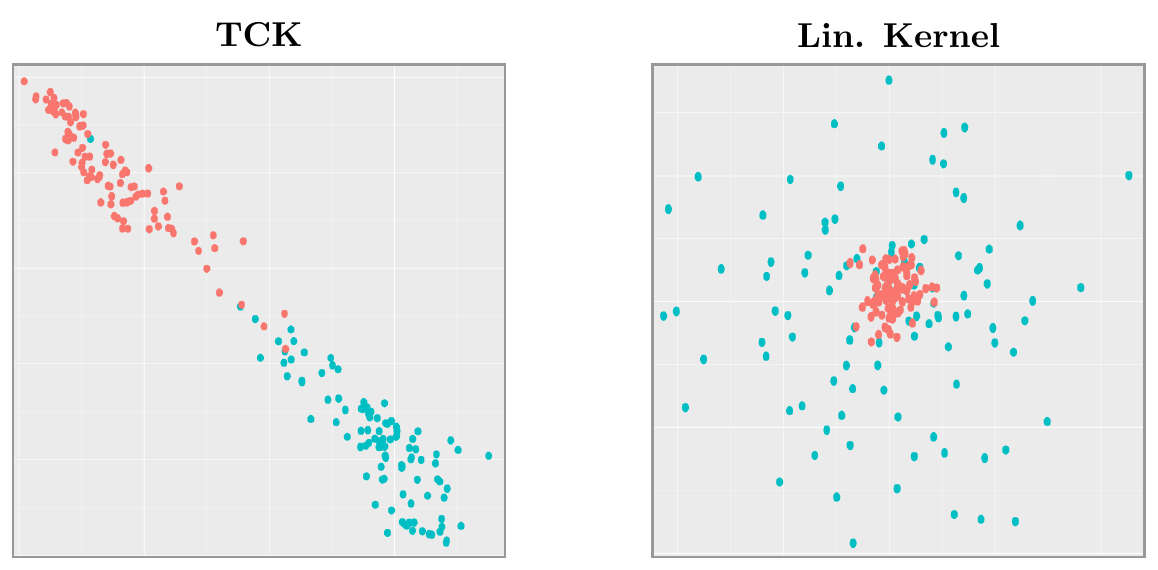}
\caption{Projection of the VAR(1) dataset to two dimensions using kPCA with the TCK and a linear kernel. The different colors indicate the true labels of the MTS.}
\label{fig:var(1)_2d_embedding}
\end{SCfigure*}
%
To evaluate the effectiveness of TCK as a kernel, we compared kPCA with TCK and kPCA with a linear kernel (ordinary PCA). 
Fig.~\ref{fig:var(1)_2d_embedding} shows that TCK maps the MTS on a line, where the two classes are well separated.  
On the other hand, PCA projects one class into a compact blob in the middle, whereas the other class is spread out. 
Learned representations like these can be exploited by learning algorithms such as an SVM.
In this case, a linear classifier will perform well on the TCK representation, whereas for the other representation a non-linear method is required. 

\paragraph{Classification with missing data}

To investigate the TCK capability of dealing with missing data in a classification task, we removed values from the synthetic dataset according to three missingness patterns: \textit{missing completely at random} (MCAR), \textit{missing at random} (MAR) and \textit{missing not at random} (MNAR)~\cite{rubin1976inference}. 
To simulate MCAR, we uniformly sampled the elements to be removed. 
Specifically, we discarded a ratio $p_{MCAR}$ of the values in the dataset, varying from $0$ to $0.5$. 
To simulate MAR, we let $x_i(t)$ have a probability $p_{MAR}$ of being missing, given that $x_j(t) > 0.5$, $i \neq j$. 
Similarly, for MNAR we let $x_i(t)$ have a probability $p_{MNAR}$ of being missing, given that $x_i(t) > 0.5$. We varied the probabilities from  0 to  0.5 to obtain different fractions of missing data. 

\begin{figure}[!t]
\centering
  \subfigure
  {
  \includegraphics[width=0.313\columnwidth, trim={0mm 1mm 1mm 0mm},clip]{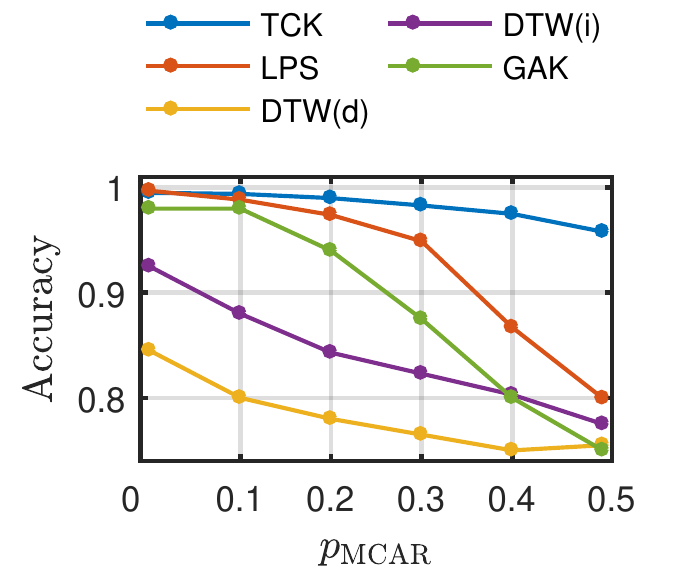}
  \label{fig:missing_VAR1_MCAR}}\hspace{0em}%
  \subfigure
  {
  \includegraphics[width=0.313\columnwidth, trim={0mm 1mm 1mm 0mm},clip]{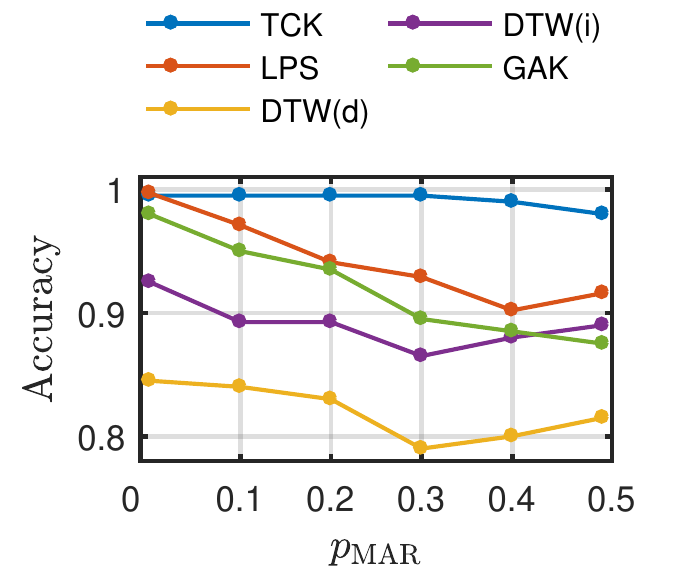}
  \label{fig:missing_VAR1_MAR}}\hspace{0em}%
  \subfigure
  {
  \includegraphics[width=0.313\columnwidth, trim={0mm 1mm 1mm 0mm},clip]{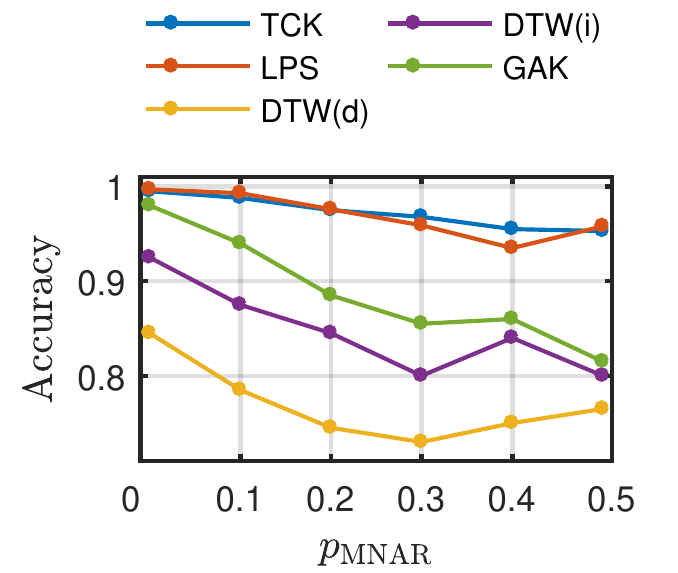}
  \label{fig:missing_VAR1_MNAR}}\hspace{0em}%
\caption{Classification accuracy on simulated VAR(1) dataset of the 1NN-classifier configured with a (dis)similarity matrix obtained using LPS, DTW (d), DTW (i), GAK and TCK. 
We report results for three different types of missingness, with an increasing percentage of missing values.}
\label{fig:missing_VAR1}
\end{figure}

For each missingness pattern, we evaluated the performance of a 1NN classifier configured with TCK, LPS, DTW (d), DTW (i) and GAK.
Classification accuracies are reported in Fig.~\ref{fig:missing_VAR1}.
First of all, we see that in absence of missing data, the performance of TCK and LPS are approximately equal, whereas the two versions of DTW and GAK yield a lower accuracy.
Then, we notice that the accuracy for the TCK is quite stable as the amount of missing data increases, for all types of missingness patterns. 
For example, in the case of MCAR, when the amount of missing data increases from 0 to 50\%, accuracy decreases to from 0.995 to 0.958. 
Likewise, when $p_{MNAR}$ increases from 0 to 0.5, accuracy decreases from 0.995 to 0.953. 
This indicates that our method, in some cases, also works well for data that are MNAR. 
On the other hand, we notice that for MCAR and MAR data, the accuracy obtained with LPS decreases much faster than for TCK. GAK seems to be sensitive to all three types of missing data.
Performance also diminishes quite fast in the DTW variants, but we also observe a peculiar behavior as the accuracy starts to increase again when the missing ratio increases.
This can be interpreted as a side effect of the imputation procedure implemented in DTW.
In fact, the latter replaces some noisy data with a mean value, hence providing a regularization bias that benefits the classification procedure.

\subsection{Benchmark time series datasets}
\begin{table*}[!t]
\small
\centering
\caption{Description of benchmark time series datasets. Column 2 to 5 show the number of attributes, samples in training and test set, and classes, respectively. $T_{min}$ is the length of the shortest MTS in the dataset and $T_{max}$ the longest MTS. $T$ is the length of the MTS after the transformation.}\label{tab: benchmark description}
\begin{tabular}{@{}ll@{ }@{ }l@{ }@{ }@{ }l@{ }@{ }@{ }ll@{ }l@{ }lc@{}}
\cmidrule[1.5pt]{1-9}
Datasets & Attributes &  Train  & Test  &  Classes  & $T_{min}$ & $T_{max}$ & $T$ & Source \\
\cmidrule[.5pt]{1-9}
ItalyPower & 1 & 67 & 1029 &2 & 24 & 24 & 24 & UCR \\
Gun Point & 1 & 50 & 150 & 2 & 150 & 150 & 150 & UCR \\
Synthetic control & 1 & 300 & 300 & 6 & 60 & 60 & 60 & UCR \\
\cmidrule[.5pt]{1-9}
PenDigits &  2 &  300 & 10692 & 10  & 8 & 8 & 8 & UCI \\
Libras & 2 & 180 & 180 & 15 & 45 & 45 & 23 &   UCI \\
ECG & 2 & 100 & 100 & 2 & 39 & 152 & 22 & Olszewski \\
uWave & 3 & 200 & 4278 & 8 & 315 & 315 & 25 & UCR \\
Char.Traj. & 3 & 300 & 2558 & 20 & 109 & 205 & 23 & UCI \\
Robot failure LP1 & 6 & 38 & 50 & 4 & 15 & 15 & 15 & UCI \\ 
Robot failure LP2 &  6 & 17 & 30 & 5 & 15 & 15 & 15 & UCI  \\ 
Robot failure LP3 & 6 & 17 & 30 & 4 & 15 & 15 & 15  & UCI \\ 
Robot failure LP4 & 6 & 42 & 75 & 3 & 15 & 15 & 15  & UCI \\ 
Robot failure LP5 & 6 & 64 & 100 & 5 & 15 & 15 & 15  & UCI\\ 
Wafer & 6 & 298 & 896 & 2 & 104 & 198 & 25 & Olszewski \\
Japanese vowels & 12 & 270 & 370 & 9 & 7 & 29 & 15 & UCI \\
ArabicDigits & 13 & 6600 & 2200 & 10 & 4 & 93 & 24 & UCI \\
CMU & 62 & 29 & 29 & 2 & 127 & 580 & 25 & CMU \\
PEMS & 963 & 267 & 173 & 7 & 144 & 144 & 25 & UCI \\
\cmidrule[1.5pt]{1-9}
\end{tabular}
\end{table*}

We applied the proposed method to multivariate benchmark datasets from the UCR and UCI databases~\cite{UCRArchive, Lichman:2013} and other published work~\cite{CMU, 
Olszewski}, described in Tab.~\ref{tab: benchmark description}. In order to also illustrate TCK's capability of dealing with UTS, we randomly picked three univariate datasets from the UCR database; \textit{ItalyPower}, \textit{Gun Point} and \textit{Synthetic control}.
Some of the multivariate datasets contain time series of different length. 
However, the proposed method is designed for MTS of the same length.
Therefore we followed the approach of Wang et al.~\cite{Wang2016237} and transformed all the MTS in the same dataset to the same length, $T$, determined by 
$
T = \left \lceil \frac{T_{max}}{\left \lceil \frac{T_{max}}{25} \right \rceil} \right \rceil,
$
where $T_{max}$ is the length of the longest MTS in the dataset and $ \lceil \: \rceil$ is the ceiling operator. 
We also standardized to zero mean and unit standard deviation. 
Since decision trees are scale invariant, we did not apply this transformation for LPS (in accordance with~\cite{baydogan2016time}).

\paragraph{Classification without missing data}
Initially we considered the case of no missing data and applied a 1NN-classifier in combination with the five different (dis)similarity measures. Tab.~\ref{tab: results} shows the mean classification accuracies, evaluated over 10 runs, obtained on the 
benchmark time series datasets. Firstly, we notice that the dependent version of DTW, in general, gives worse results than the independent version. 
Secondly, TCK gives the best accuracy for 8 out of 18 datasets. LPS and GAK are better than the competitors for 8 and 3 datasets, respectively. The two versions of DTW achieve the highest accuracy for Gun Point. On CMU all methods reach a perfect score. 
We also see that TCK works well for univariate data and gives comparable accuracies to the other methods.

\begin{SCtable*}[0.3][!t]
\small
\centering
\caption{Classification accuracy on different UTS and MTS benchmark datasets obtained using TCK, LPS, DTW (i), DTW (d) and GAK in combination with a 1NN-classifier. The best results are highlighted in bold. }
\label{tab: results}
\begin{tabular}{llllll}
\cmidrule[1.5pt]{1-6}
Datasets &  TCK  & LPS  & DTW (i) & DTW (d) & GAK \\
\cmidrule[.5pt]{1-6}
ItalyPower &   0.922&   0.933  & 0.918 & 0.918 & \textbf{0.950} \\ 
Gun Point &   0.923 & 0.790 &  \textbf{1.000} & \textbf{1.000} & 0.900 \\ 
Synthetic control &   \textbf{0.987} & 0.975     & 0.937 & 0.937 & 0.870 \\ 
\cmidrule[.5pt]{1-6}
Pen digits &   0.904 & 0.928 &  0.883 & 0.900 & \textbf{0.945} \\ 
Libras &  0.799 &  \textbf{0.894} & 0.878   & 0.856 & 0.811\\ 
ECG &     \textbf{0.852}  & 0.815 &  0.810 & 0.790 & 0.840 \\ 
uWave   &  0.908 & \textbf{0.945}  & 0.909 & 0.844 & 0.905\\ 
Char. Traj.  & 0.953 & \textbf{0.961} &  0.903 & 0.905 & 0.935 \\ 
Robot failure LP1  & \textbf{0.890} & 0.836 & 0.720  & 0.640 & 0.720\\ 
Robot failure LP2   & 0.533 & \textbf{0.707}  &0.633 & 0.533 & 0.667\\  
Robot failure LP3 &  \textbf{0.703}  &  0.687  &0.667  &  0.633 & 0.633 \\ 
Robot failure LP4   & 0.848 & \textbf{0.914} &  0.880  & 0.840 & 0.813\\ 
Robot failure LP5 & 0.596 & \textbf{0.688} & 0.480 & 0.430 & 0.600\\ 
Wafer &    \textbf{0.982} & 0.981  & 0.963 & 0.961 & 0.967 \\ 
Japanese vowels   & \textbf{0.978} & 0.964&   0.965 & 0.865 & 0.965\\ 
ArabicDigits & 0.945 & \textbf{0.977} & 0.962 & 0.965 & 0.966 \\
CMU    &\textbf{1.000} & \textbf{1.000} & \textbf{1.000} & \textbf{1.000} & \textbf{1.000} \\
PEMS    & \textbf{0.878} & 0.798 & 0.775 & 0.763 & 0.763 \\
\cmidrule[1.5pt]{1-6}
\end{tabular}
\end{SCtable*}


\paragraph{Classification with missing data}
We used the \textit{Japanese vowels} and \textit{uWave} datasets to illustrate the TCKs ability to classify real-world MTS with missing data. 
We removed different fractions of the values completely at random (MCAR) and ran a 1NN-classifier equipped with TCK, LPS, DTW (i) and GAK.
We also compared to TCK and LPS with imputation of the mean. 
Mean classification accuracies and standard deviations, evaluated over 10 runs, are reported in Fig.~\ref{fig:missing_tck}.

\begin{SCfigure}[0.3][!t] 
\centering
  \includegraphics[width=0.8\textwidth]{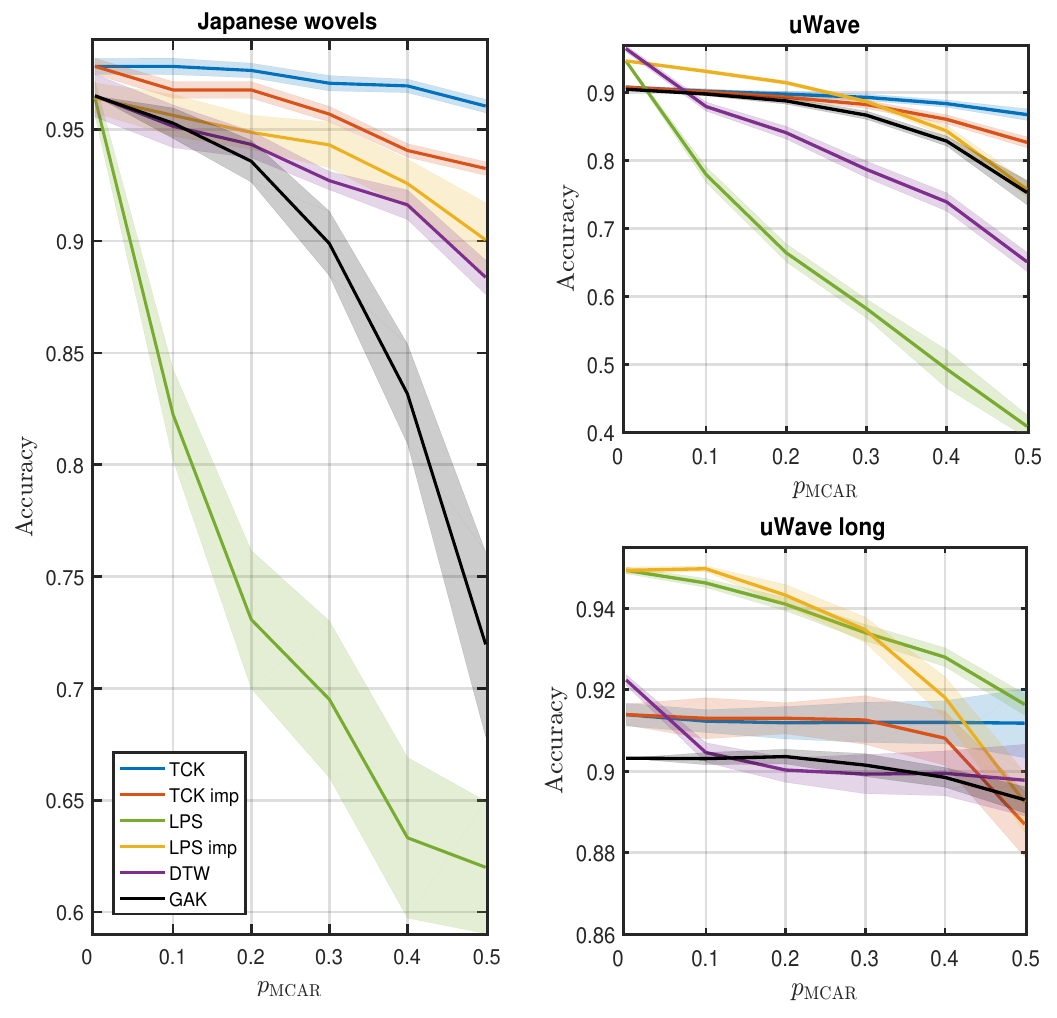}
\caption{Classification accuracies with different proportions of MCAR data for \textit{Japanese vowels} and \textit{uWave}. \textit{uWave long} represent the uWave dataset where the MTS have their original length ($T=315$).   Shaded areas represent standard deviations calculated over 10 independent runs.}
\label{fig:missing_tck}
\end{SCfigure}

On the Japanese vowels dataset the accuracy obtained with LPS decreases very fast as the fraction of missing data increases and is greatly outperformed by LPS imp. The performance of GAK also diminishes quickly. 
The accuracy obtained with DTW (i) decreases from 0.965 to 0.884, whereas TCK imp decreases from 0.978 to 0.932. 
The most stable results are obtained using TCK: as the ratio of missing data increases from 0 to 0.5, the accuracy decreases from 0.978 to 0.960. 
We notice that, even if TCK imp yields the second best results, it is clearly outperformed by TCK. 

Also for the uWave dataset the accuracy decreases rapidly for LPS, DTW and GAK. 
The accuracy for TCK is 0.908 for no missing data, is almost stable up to 30\% missing data and decreases to 0.868 for 50\% missing data. 
TCK imp is outperformed by TCK, especially beyond 20\% missingness. 
We notice that LPS imp gives better results than LPS also for this dataset. 
For ratios of missing data above 0.2 TCK gives better results than LPS imp, even though in absence of missingness the accuracy for LPS is 0.946, whereas TCK yields 0.908 only.

To investigate how TCK works for longer MTS, we classified the uWave dataset with MTS of original length, 315. 
In this case the LPS performs better than for the shorter MTS, as the accuracy decreases from 0.949 to 0.916. 
We also see that the accuracy decreases faster for LPS imp. 
For the TCK the accuracy increased from 0.908, obtained on uWave with MTS of length 25, to 0.914 on this dataset.
TCK still gives a lower accuracy than LPS when there is no missing data. However, we see that TCK is very robust to missing data, since the accuracy only decreases to 0.912 when the missing ratio increases to 0.5. 
TCK imp performs equally well up to 30\% missing data, but performs poorly for higher missing ratios.

These results indicate that, in contrast to LPS, TCK is not sensitive to the length of the MTS. It can deal equally well with short MTS and long MTS.

\paragraph{Dimensionality reduction and visualization}

\begin{figure*}[!t]
\centering
\includegraphics[width=0.95\columnwidth, keepaspectratio=true]{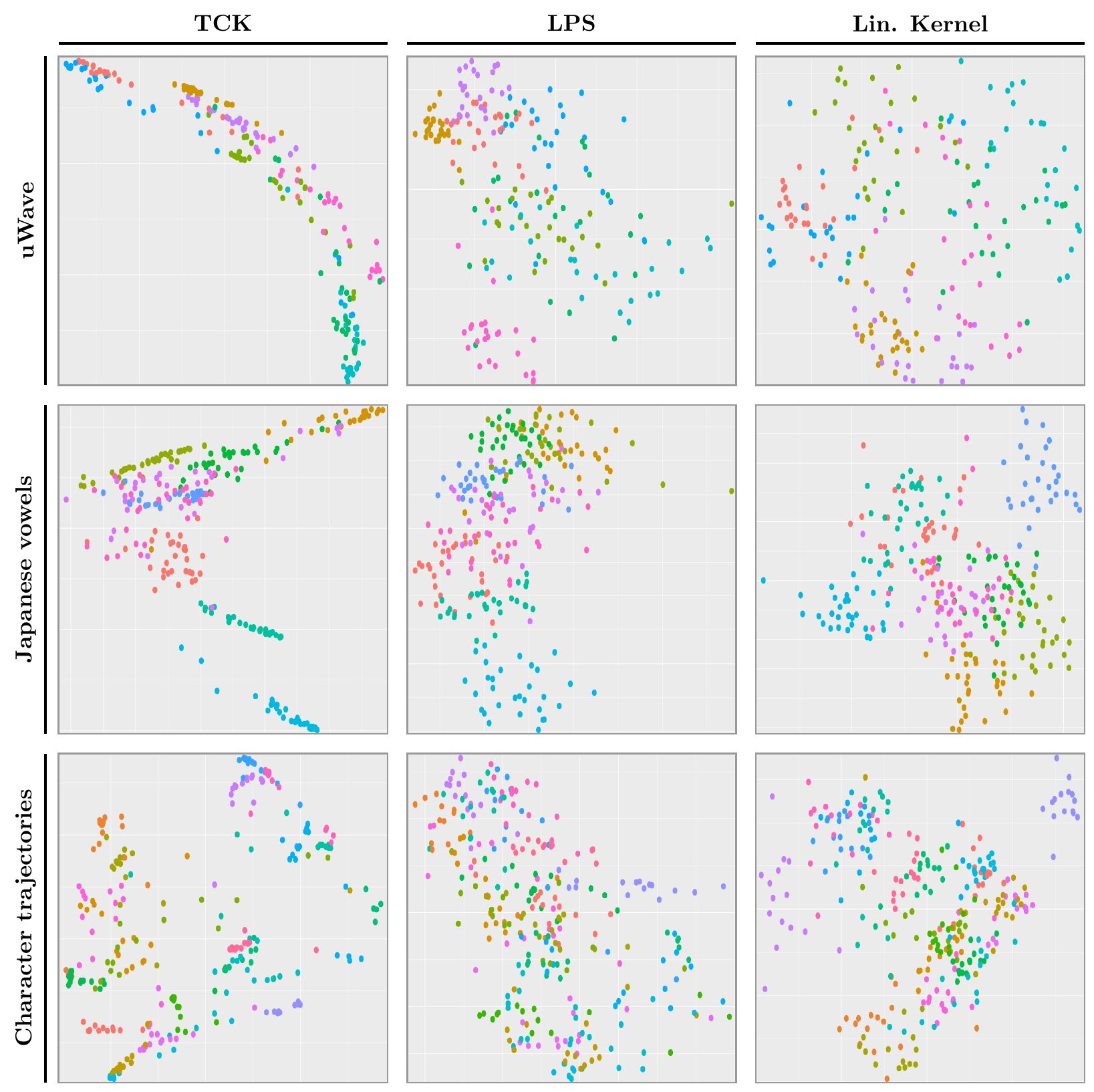}
\caption{Projection of three MTS datasets onto the two top principal components when different kernels are applied. The different colors indicate true class labels.}
\label{fig:dimensionality_reduction_benchmark}
\end{figure*}

In Fig.~\ref{fig:dimensionality_reduction_benchmark} we have plotted the two principal components of \textit{uWave}, \textit{Japanese vowels} and \textit{Character trajectory}, obtained with kPCA configured with TCK, LPS and a linear kernel. 
We notice a tendency in LPS and linear kernel to produce blob-structures, whereas the TCK creates more compact and separated embeddings. 
For example, for Japanese vowels TCK is able to isolate two classes from the rest. %

\subsection{Sensitivity analysis}
The hyperparameters in the TCK are: maximum number of mixtures $C$, number of randomizations $Q$, segment length, subsample size $\mathcal{\eta}$, number of attributes, hyperparameters $\Omega$ and initialization of GMM parameters $\Theta $. 
However, all of them except $C$ and $Q$, are chosen randomly for each $q \in \mathcal{Q}$.  
Hence, the only hyperparameters that have to be set by the user are $C$ and $Q$.

\begin{figure*}[!t]
\centering
  \subfigure
  {
  \includegraphics[width=0.45\textwidth, trim={0mm 1mm 0mm 2mm},clip]{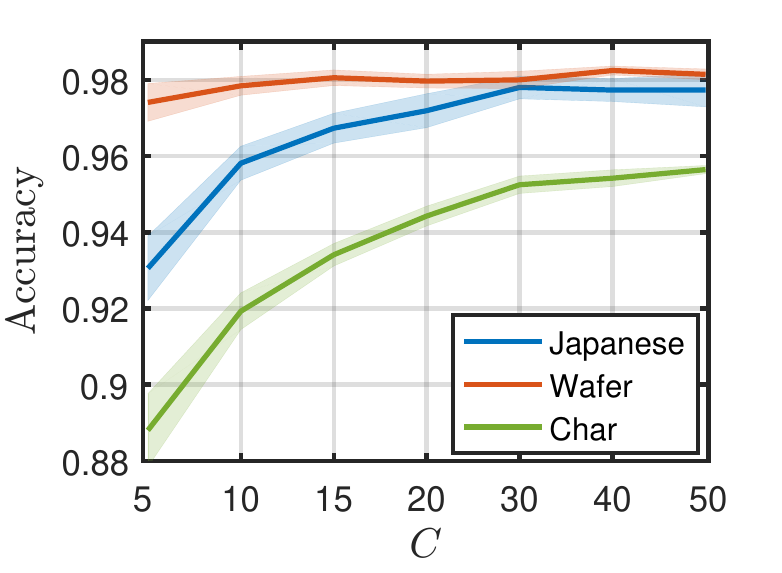}
  \label{fig:sensitivity_C}}\hspace{0em}%
  ~
  \subfigure
  {
  \includegraphics[width=0.45\textwidth, trim={0mm 1mm 0mm 2mm},clip]{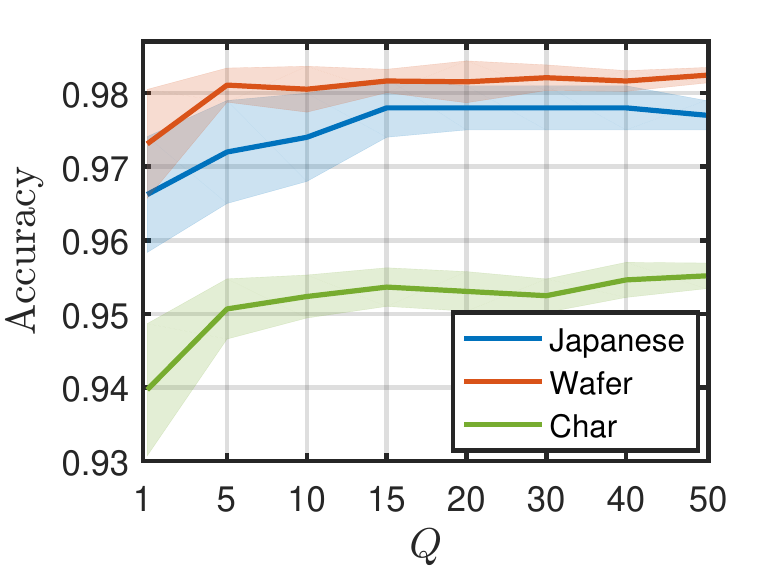}
  \label{fig:sensitivity_Q}}\hspace{0em}%
\caption{Accuracies for (left) $Q = 30$ and varying $C$, and (right) $C = 40$ and varying $Q$, over three datasets. Shaded areas represent standard deviations calculated over 10 replications.}
\label{fig:sensitivity}
\end{figure*}

We have already argued that the method is robust and not sensitive to the choice of these hyperparameters. 
Here, we evaluate empirically TCK's dependency on the chosen maximum number of mixture components $C$ and of randomizations $Q$, on the three datasets \textit{Japanese vowels}, \textit{Wafer} and \textit{Character trajectories}. Fig.~\ref{fig:sensitivity} (left) shows the classification accuracies obtained using TCK in combination with a 1NN-classifier on the three datasets by fixing $Q = 30$ and varying $C$ from 5 to 50. 
We see that the accuracies are very stable for $C$ larger than 15-20. Even for $C = 10$, the accuracies are not much lower. 
Next, we fixed $C= 40$ and varied $Q$ from 5 to 50. Fig.~\ref{fig:sensitivity} (right) shows that the accuracies increase rapidly from $Q=1$, but also that the it stabilizes quite quickly. 
It appears sufficient to choose $Q > 10$, even if the standard errors are a bit higher for lower $Q$.
These results indicate that it is not critical to tune the hyperparameters $C$ and $Q$ correctly, which is important if the TCK should be learned in an unsupervised way.  

\subsection{Computational time}
All experiments were run using an Ubuntu 14.04 64-bit system with 64 GB RAM and an Intel Xeon E5-2630 v3 processor. We used the low-dimensional \emph{uWave}  and the high-dimensional \emph{PEMS} dataset to empirically test the running time of the TCK. To investigate how the running time is affected by the length and number of variables of the MTS, for the PEMS dataset we selected $V=\{963,100,10,2\}$attributes, while for the uWave dataset we let $T=\{315,200,100,25\}$. Tab.~\ref{tab: running time} shows the running times (seconds) for TCK, LPS, GAK and DTW (i) on these datasets. We observe that the TCK is competitive to the other methods and, in particular, that its running time is not that sensitive to increased length or number of attributes.

\begin{SCtable*}[0.5][!t]
\small
\centering
\caption{Running times (seconds) for computing the similarity between the test and training set for two datasets. The time in brackets represents time used to train the models for the methods that need training. For the PEMS dataset we used the original 963 attributes, but also ran the models on subsets consisting of 100, 10 and 2 attributes, respectively. For the uWave dataset we varied the length from $T = 315$ to $T=25$.}
\label{tab: running time}
\begin{tabular}{lllll}
\cmidrule[1.5pt]{1-5}
\textbf{PEMS} &  $V=963$  & $V=100$  & $V=10$ & $V=2$  \\
\cmidrule[.5pt]{1-5}
TCK & 3.6 (116) & 3.5 (115) & 2.5 (84) & 1.2 (31)\\
LPS & 22 (269) & 3.3 (33) & 1.3 (4.5) & 0.9 (2.9) \\
GAK &  514 &  52 &  5.8 & 1.6 \\
DTW (i) &  1031 & 119 & 13 & 3.5\\
\cmidrule[1.5pt]{1-5}
\textbf{uWave} &  $T=315$  & $T=200$  & $T=100$ & $T=25$  \\
\cmidrule[.5pt]{1-5}
TCK & 42 (46) & 39 (45) & 41 (46) & 27 (35)\\
LPS & 26 (17) & 17 (11) & 11 (7)& 6.6 (2.5) \\
GAK & 28  &  25 & 21 & 20 \\
DTW (i) & 506  & 244  & 110  & 59 \\
\cmidrule[1.5pt]{1-5}
\end{tabular}
\end{SCtable*}

\section{Conclusions}
\label{sec:conclusions}
We have proposed a novel similarity measure and kernel for multivariate time series with missing data. The robust time series cluster kernel was designed by applying an ensemble strategy to probabilistic models. TCK can be used as input in many different learning algorithms, in particular in kernel methods.

The experimental results demonstrated that the TCK (1) is robust to hyperparameter settings, (2) is competitive to established methods on prediction tasks without missing data and (3) is better than established methods on prediction tasks with missing data.

In future works we plan to investigate whether the use of more general covariance structures in the GMM, or the use of HMMs as base probabilistic models, could improve TCK.

\section*{Conflict of interest}
The authors have no conflict of interest related to this work.

\section*{Acknowledgement}

This work (Robert Jenssen and Filippo Bianchi) is partially supported by the Research Council of Norway over FRIPRO grant no. 234498 on developing the \emph{Next Generation Learning Machines}. Cristina Soguero-Ruiz is supported by FPU grant AP2012-4225 from Spanish Government.

The authors would like to thank Sigurd Løkse and Michael Kampffmeyer for useful discussions, and Jonas Nordhaug Myhre for proof reading the article.

\appendix
\section{}
\label{app:lps}
\begin{theorem}\label{theorem:LPS}
  LPS is a kernel.
\end{theorem}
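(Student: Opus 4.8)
The plan is to mirror the structure of the proof that TCK is a kernel: show that LPS, like TCK, can be written as an inner product between explicit feature vectors built from the fraction of "bag-of-words" occurrences, and then invoke the fact that such an inner product on a finite-dimensional Euclidean space is automatically a kernel. First I would recall the precise construction of the LPS similarity from \cite{baydogan2016time}: one grows an ensemble of regression trees on randomly selected segments of the (multivariate) time series, and for each time series $X^{(n)}$ one records, at the leaf nodes of the trees, a histogram counting how many of its segments fall into each leaf. Normalizing these counts by the total number of segments yields a probability-vector-style representation $h^{(n)} \in \mathbb{R}^{L}$, where $L$ is the total number of leaves across all trees. The LPS similarity between $X^{(n)}$ and $X^{(m)}$ is then (a monotone function of) the sum over leaves of $\min(h^{(n)}_\ell, h^{(m)}_\ell)$, i.e.\ the histogram-intersection similarity.

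The key steps, in order, are: (i) make explicit that the LPS representation $h^{(n)}$ is a fixed, data-independent-once-the-trees-are-grown map $\Phi_{\mathrm{LPS}}: \mathcal{X} \to \mathbb{R}^L$; (ii) observe that the histogram-intersection kernel $k(h,h') = \sum_{\ell=1}^{L} \min(h_\ell, h'_\ell)$ is a well-known positive semi-definite kernel on $\mathbb{R}^L_{\ge 0}$ — this can be cited, or proved in one line by writing $\min(a,b) = \int_0^\infty \mathbf{1}[a \ge t]\,\mathbf{1}[b \ge t]\,dt$ so that $k(h,h') = \int_0^\infty \langle \phi_t(h), \phi_t(h') \rangle\, dt$ with $\phi_t(h) = (\mathbf{1}[h_\ell \ge t])_\ell$, exhibiting $k$ as an integral of psd kernels; (iii) conclude that the composition $k(\Phi_{\mathrm{LPS}}(\cdot), \Phi_{\mathrm{LPS}}(\cdot))$ is psd, since composing a kernel with a fixed feature map preserves positive semi-definiteness; and (iv) check that whatever normalization or averaging over the tree ensemble the LPS uses is itself a kernel-preserving operation (sums and convex combinations of kernels are kernels, by the same argument used for TCK). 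Collecting these, LPS satisfies Definition~\ref{def 1} with Hilbert space $\mathcal{H} = L^2((0,\infty); \mathbb{R}^L)$ (or simply $\mathbb{R}^{L}$ if one prefers the finite-dimensional Mercer expansion of histogram intersection), which establishes the claim.

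The main obstacle I expect is not the psd argument itself — histogram intersection being a kernel is classical — but rather pinning down the exact definition of the LPS similarity as used in \cite{baydogan2016time} and verifying that it is genuinely of histogram-intersection type (as opposed to, say, an $L_1$ distance between the histograms, which would instead give a \emph{conditionally} negative definite object and would need the extra step of exponentiating to obtain a kernel). So the delicate part is bookkeeping: faithfully reproducing the LPS construction, identifying the similarity formula, and confirming it is the intersection $\sum_\ell \min(h^{(n)}_\ell, h^{(m)}_\ell)$ rather than a distance. Once that identification is made, the remainder is the short integral-representation argument above together with closure of kernels under nonnegative combinations.
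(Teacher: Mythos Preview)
Your proposal is correct and follows essentially the same route as the paper: both identify the LPS similarity as the histogram intersection $\sum_\ell \min(h^{(n)}_\ell, h^{(m)}_\ell)$ and then exhibit it as an inner product via threshold indicators $\mathbf{1}[h_\ell \ge t]$. The only difference is presentational --- the paper exploits that the leaf counts are integers bounded by $N_s$ to build a finite-dimensional binary vector $\bar{H}(X)\in\{0,1\}^{N_sRJ}$ whose ordinary dot product equals the intersection, whereas you phrase the same identity as the continuous integral $\min(a,b)=\int_0^\infty\mathbf{1}[a\ge t]\,\mathbf{1}[b\ge t]\,dt$; your uncertainty about whether LPS is intersection or $L_1$ is resolved in the paper exactly as you hoped.
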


\begin{proof}
  The LPS similarity between two time series $X^{(n)}$ and $X^{(m)}$ is computed from the LPS representation, given by the frequency vectors $H(X^{(n)})$ and $H(X^{(m)})$, where $H(X^{(n)}) = \left[ h_{1,1}^{(n)}, \dots, h_{R,J}^{(n)} \right] \in \mathbb{N}_0^{R J}$ being $h_{r,j}^{(n)} \in \mathbb{N}_{0}$ the number of segments of $X^{(n)}$ contained in the leaf $r$ of tree $j$ and $J$ the number of trees \cite{baydogan2016time}.  Let $N_s=T-L-1$ be the total number of segments of length $L$  in the MTS $X$ of length $T$. Without loss of generality we assume that $N_s$ and $R$, the total number of leaves, are constant in all trees. The LPS similarity reads
  \begin{equation}
  \label{eq:lps}
    S\left( X^{(n)}, X^{(m)} \right) = \frac{1}{RJ} \sum \limits_{r=1}^{R} \sum \limits_{j=1}^{J} \min \left( h_{r,j}^{(n)}, h_{r,j}^{(m)} \right) \in [0,1].
  \end{equation}
  We notice that,  if we ignore the normalizing factor, Eq. \ref{eq:lps} is the computation of the intersection between $H(X^{(n)})$ and $H(X^{(m)})$. 
  In order to complete the proof, we now introduce an equivalent binary representation of the frequency vectors in the leaves.
  We represent the leaf $r$ of the tree $j$ as a binary sequence, with $h_{r,j}$ 1s in front and 0s $N_s-h_{r,j}$ in the remaining positions 
  \begin{equation*}
    \bar{H}(X) = \left[ \UOLunderbrace{\UOLoverbrace{1,\dots,1}^{h_{1,1}}, \UOLoverbrace{0,\dots,0}^{N_s-h_{1,1}} }_{\text{leaf} \, (1,1)}, \dots, \UOLunderbrace{\UOLoverbrace{1,\dots,1}^{h_{r,j}}, \UOLoverbrace{0,\dots,0}^{N_s-h_{r,j}} }_{\text{leaf} \, (r,j)}, \dots, \UOLunderbrace{\UOLoverbrace{1,\dots,1}^{h_{R,J}}, \UOLoverbrace{0,\dots,0}^{N_s-h_{R,J}} }_{\text{leaf} \, (R,J)} \right] \in \{0,1\}^{N_sRJ}.
  \end{equation*}
   
  The intersection between $H(X^{(n)})$ and $H(X^{(m)})$, yielded by Eq. \ref{eq:lps}, can be expressed as a bitwise operation through dot product
  \begin{equation}
  \label{eq:bitwise_intersec}
    \left( H(X^{(n)}) \wedge H(X^{(m)}) \right) = \bar{H}(X^{(n)})^T \bar{H}(X^{(m)}),
  \end{equation}
  which is a linear kernel in the linear span of the LPS representations, which is isometric to $\mathbb{R}^{N_sRJ}$. 
\end{proof}

\bibliographystyle{elsarticle-num}

\bibliography{references}

\end{document}